
\documentclass[letterpaper, 10pt, conference]{ieeeconf}  

\IEEEoverridecommandlockouts                              

\overrideIEEEmargins   


\usepackage[english]{babel}

\makeatletter
\adddialect\l@ENGLISH\l@english
\makeatother

\usepackage{cite}

\usepackage{epsfig}
\usepackage{pst-all, graphics, graphicx, color}
\usepackage[crop=pdfcrop]{pstool}
\usepackage{psfrag}
\usepackage{subfigure}
\usepackage{amsmath, amssymb} 
\usepackage{epstopdf}
\usepackage{tikz}
\usepackage{multirow}
\usepackage{float}
\usepackage{algorithm, algorithmic}
\usepackage{dsfont}

\usepackage[makeroom]{cancel}

\floatstyle{ruled}
\newfloat{algorithm}{tbp}{loa}
\providecommand{\algorithmname}{Algorithm}
\floatname{algorithm}{\protect\algorithmname}

\makeatletter
\newcommand\footnoteref[1]{\protected@xdef\@thefnmark{\ref{#1}}\@footnotemark}
\makeatother


\renewcommand{\algorithmiccomment}[1]{\bgroup\hfill\scriptsize//~#1\egroup}
\newcommand{\algorithmicoutput}{\textbf{Output:}}
\newcommand{\OUTPUT}{\item[\algorithmicoutput]}

\def\tr{^{\rm T}}

\def\zero{\hbox{\bf 0}}

\def\bfa{{\mbox{\boldmath $a$}}}
\def\bfb{{\mbox{\boldmath $b$}}}
\def\bfc{{\mbox{\boldmath $c$}}}

\def\bfe{{\mbox{\boldmath $e$}}}
\def\bff{{\mbox{\boldmath $f$}}}
\def\bfg{{\mbox{\boldmath $g$}}}
\def\bfh{{\mbox{\boldmath $h$}}}

\def\bfn{{\mbox{\boldmath $n$}}}

\def\bfu{{\mbox{\boldmath $u$}}}

\def\bfx{{\mbox{\boldmath $x$}}}
\def\bfy{{\mbox{\boldmath $y$}}}

\def\bfA{{\mbox{\boldmath $A$}}}

\def\bfG{{\mbox{\boldmath $G$}}}

\def\bfI{{\mbox{\boldmath $I$}}}

\def\bfK{{\mbox{\boldmath $K$}}}

\def\bfN{{\mbox{\boldmath $N$}}}

\def\bfX{{\mbox{\boldmath $X$}}}


\newtheorem{theorem}{\bf Theorem}

\newtheorem{definition}{\bf Definition}
\newtheorem{remark}{\bf Remark}
\newtheorem{lemma}{\bf Lemma}

\pdfminorversion=4 


\begin{document}

\title{\LARGE \bf
Learning Barrier Functions for Constrained Motion Planning with Dynamical Systems}

\author{Matteo Saveriano$^{1}$ and Dongheui Lee$^{2,3}$%
\thanks{$^{1}$Intelligent and Interactive Systems and Digital Science Center (DiSC), University of Innsbruck, Innsbruck, Austria {\tt matteo.saveriano@uibk.ac.at}.}%
\thanks{$^{2}$Human-Centered Assistive Robotics, Technical University of Munich, Munich, Germany {\tt dhlee@tum.de}.}%
\thanks{$^{3}$Institute of Robotics and Mechatronics, German Aerospace Center (DLR), We{\ss}ling, Germany {\tt dongheui.lee@dlr.de}.}%
\thanks{This work was carried out while M. Saveriano was at the Institute of Robotics and Mechatronics of the German Aerospace Center (DLR). This work has been supported by Helmholtz Association.}
}


\maketitle


\begin{abstract}
Stable dynamical systems are a flexible tool to plan robotic motions in real-time. In the robotic literature, dynamical system motions are typically planned without considering possible limitations in the robot's workspace. This work presents a novel approach to learn workspace constraints from human demonstrations and to generate motion trajectories for the robot that lie in the constrained workspace. Training data are incrementally clustered into different linear subspaces and used to fit a low dimensional representation of each subspace. By considering the learned constraint subspaces as zeroing barrier functions, we are able to design a control input that keeps the system trajectory within the learned bounds. This control input is effectively combined with the original system dynamics preserving eventual asymptotic properties of the unconstrained system. Simulations and experiments on a real robot show the effectiveness of the proposed approach.  
\end{abstract}


\IEEEpeerreviewmaketitle

\section{Introduction}\label{sec:intro}
Modern robots are asked to perform complex tasks in unstructured environments and need to be endowed with a large set of pre-defined behaviors. Ideally, such behaviors are compactly represented, {generate motion trajectories in real-time, and generalize to different execution contexts. Moreover,} the list of behaviors should be easy to update. A trend in robotics suggests to use stable dynamical systems (DS) for planning of robotic skills \cite{DMP, SEDS, NeuralLearn2,  Clf, tau-SEDS, Perrin16, Blocher17}. DS can be eventually combined with Programming by Demonstrations (PbD) techniques \cite{Billard_PbD} to intuitively acquire novel skills.

Among the others, an interesting property of stable DS is their capability to plan converging motions from any point of the state space. At run time, the initial robot position is fed into the DS and the converging reference trajectory is automatically generated. However, {in constrained environments, planning with DS may fail because there is no guarantee that the generated motion satisfies the constraints}. 

This work focuses on motion planning with DS in a constrained workspace and, more specifically, in formally guaranteeing that generated robot trajectories do not violate the constraints (see Fig. \ref{fig:overview}). The idea is to consider the constraints on the robot motion as constraints on the state of the DS and to generate feasible trajectories that the low-level robot controller is able to track. We exploit a parameterized representation of the constraints and develop an approach to incrementally fit novel constraints from training data. In our framework, constraint representations are interpreted as zeroing barrier functions \cite{Ames17, Wieland07, Xu15, Rauscher16}. 
{Under the assumption that the constrained workspace is a convex set,} we are able to design a control input {that} prevents the DS trajectories to violate the constraints. The control input is computed by solving an optimization problem trying to minimize the influence of the resulting controller on asymptotic properties of the unconstrained DS. The optimization problem admits an analytical solution {that} makes the approach computationally efficient and effectively usable in robotic applications.

\begin{figure}[t!]
	\centering
	\includegraphics[width=0.9\columnwidth]{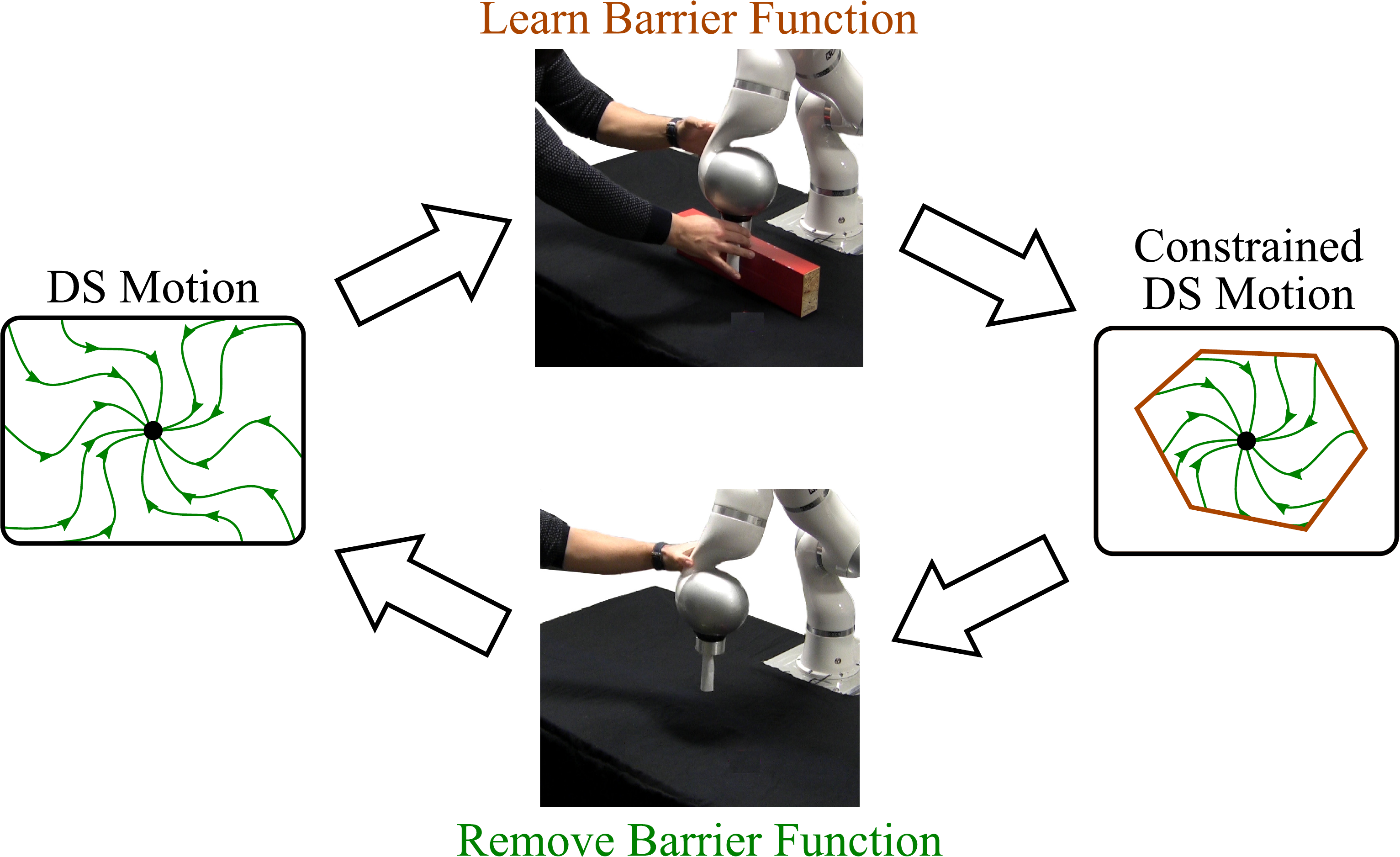}
	\caption{Overview of the proposed approach. Kinesthetic teaching is used both to add new constraints and to remove previously learned ones.}
	\label{fig:overview}
\end{figure}

The rest of the paper is organized as follows. Section \ref{sec:rel_work} summarizes related work in the field. Section \ref{sec:barrier} provides preliminary knowledge on DS and barrier functions. The extension to multiple barrier functions is discussed in Sec. \ref{sec:planning}. An approach for incremental learning of barrier functions is described in Sec. \ref{sec:learning}. {Comparison with existing approaches and} experiments on a real robot are presented in Sec. \ref{sec:experiments}. Section \ref{sec:conclusion} {concludes the paper}.

\section{{Related Work}}\label{sec:rel_work}
\subsection{Motion planning with dynamical systems}\label{sec:batch_ds_learn}
One of the first examples of motion planning with DS, dynamic movement primitives (DMPs) \cite{DMP} superimpose a non-linear forcing term to a linear dynamics to reproduce a demonstrated motion. The forcing term is suppressed by a time dependent signal to ensure convergence to the goal. DMPs can plan in joint or Cartesian space \cite{Ude14, Saveriano19} 
and can be parameterized to customize the execution \cite{TP-DMP, Pervez18}.

A limitation of DMPs is that stability is retrieved by canceling the forcing term after a certain time. The stable estimator of dynamical systems \cite{SEDS} {overcomes this limitation by using} a non-linear and autonomous DS, learned from demonstrations, and {constraining the DS parameters to ensure stability}. {Constraints adopted in \cite{SEDS}}  prevent to accurately learn non-linear motions. This accuracy problem is faced in several works \cite{NeuralLearn2, Clf, tau-SEDS, Perrin16}. For instance, \cite{tau-SEDS, Perrin16} {transform the dynamics in a diffeomorphed} space where demonstrations are accurately reproduced. {The work in \cite{NeuralLearn2} derives a set of stability constraints from a learned Lyapunov function, while \cite{Clf} uses the Lyapunov function to stabilize the DS}. A common result among \cite{NeuralLearn2, Clf, tau-SEDS, Perrin16} is that DS are able to plan {stable motions with complex shapes}.

Stable DS have several interesting properties {that} make them an effective tool for reactive motion planning. Complex dynamics are learned from demonstrations which greatly simplifies the DS design. DS generate converging motions from any point in the state space and the convergence is still preserved if the DS velocity is scaled with a positive scaling factor. 
DS trajectories can be modified to avoid possible collisions \cite{DS_avoidance, Saveriano13, Saveriano14, Saveriano17, Hoffmann09} or to realize an incremental learning of motor skills \cite{Nemec_15, Kulvicius13, Maeda17, Kronander15, Saveriano18,khoramshahi18}. DS parameters can encode kinematic and impedance behaviors \cite{Calinon10, SaverianoURAI14} and can be iteratively refined, e.g. {via} reinforcement learning \cite{Kormushev10, Buchli11}. 

Our {work} is complementary to {these} approaches. {We use} kinesthetic teaching to demonstrate the boundary of {the admissible set}. Given the constraints, the dynamics of a given DS---stable or unstable, manually designed or learned from demonstrations---are constrained within the admissible set while preserving eventual asymptotic properties of the DS.



\subsection{Constrained dynamics and motion planning}


DS are typically used in a robot and workspace agnostic manner. However, {robot motion} is subject to constraints deriving from physical limitations (e.g. velocity limits) or restrictions in the workspace (e.g. obstacles). {Our idea} is to consider constraints on the robot motion as constraints on the state of the DS. In this way, the constrained DS generates a feasible trajectory that the low-level robot controller is able to accurately track. The states of the DS {that} comply with the constraints belong to the {\textit{admissible set}. Since} DS trajectories may exit the admissible set, it is interesting to {design a control input that renders the set forward invariant}. In the control community, there are two main approaches for {this}, namely the \textit{invariance control} and the \textit{barrier functions}.

The invariance control \cite{Wolff04, Wolff05, Scheint08, Kimmel14} {renders the admissible set forward invariant with a discontinuous control}. The work in \cite{Zanchettin16} considers the robustness of the invariance control to uncertainties in the system dynamics or sensor measurements {using quadratic programming. The problem of invariance control is that} the computed control is discontinuous and may cause chattering. {As shown in \cite{Kimmel14}, chattering can be reduced}, but it is not guaranteed that chattering is avoided.   

Barrier functions \cite{Ames17, Wieland07, Xu15, Rauscher16} are capable of rendering the admissible set invariant with a smooth control action and therefore represent a suitable alternative to invariance control. In the literature, there are mainly two kinds of barrier functions. Reciprocal barrier functions are unbounded on the boundary of the admissible set, while zeroing barrier functions (ZBFs) vanish at the boundary. In this paper, we exploit ZBFs {because: \textit{i)}} having an unbounded function is undesirable while designing a controller for real-time or embedded systems \cite{Ames17} and {\textit{ii)}} the controller derived from the ZBF formulation is robust to system and sensor uncertainties \cite{Xu15}. {An experimental comparison between reciprocal and zeroing barriers is presented in Sec. \ref{subsec:comparison}.} 

{The problem of planning suitable robot motions can be alternatively seen as a path planning problem \cite{Choset2005, Bowen17}. Global path planners are capable of discovering feasible paths even in complex scenarios with high-dimensional configuration spaces and non convex constraint sets, while guaranteeing completeness and asymptotic optimality of the plan \cite{Bowen17}. On the contrary, the approach presented in this work applies to convex regions and Cartesian trajectories\footnote{{The approach naturally extends to joint space only if joint constraints form a convex set---e.g. box constraints on the joint angles.}}.  The drawback of path planners, compared with DS-based planners, is the higher computation time needed to plan the robot trajectory.
}

\section{{Zeroing Barrier Functions}}\label{sec:barrier}
\subsection{Preliminaries}
Consider the first-order DS {$\dot{\bfx} = \bff(\bfx)$, $\bfy = \bfh(\bfx)$}, where $\bfx, \dot{\bfx} \in \mathbb{R}^n$ are the state and its time derivative respectively (position and velocity of the robot), $\bff : \mathbb{R}^{n} \rightarrow \mathbb{R}^{n}$ is a locally Lipschitz 
non-linear function, and the output $\bfy$ is given by the vector of functions $\bfh(\bfx):\mathbb{R}^n\rightarrow\mathbb{R}^m$. A solution of $\dot{\bfx} = \bff(\bfx)$, namely $\bfx(\bfx_0,t) \in \mathbb{R}^{n}$, is called a trajectory. Different initial conditions $\bfx_0$ generate different trajectories. Let us denote with $M(\bfx_0)$ the maximal interval  of  existence of $\bfx(\bfx_0,t)$. The DS is said to be \textit{forward  complete} if $M(\bfx_0) = \mathbb{R}_0^+$ for any $\bfx_0 \in \mathbb{R}^{n}$. A set $\mathcal{D}\subseteq\mathbb{R}^{n}$ is \textit{forward  invariant} if for every $\bfx_0 \in \mathcal{D}$ the trajectory $\bfx(\bfx_0,t)  \in \mathcal{D}$ for all $t \in M(\bfx_0)$. A point $\hat{\bfx} : \bff(\hat{\bfx}) = \mathbf{0} \in \mathbb{R}^{n}$ is an equilibrium point. An equilibrium $\hat{\bfx} \in \mathcal{S} \subset \mathbb{R}^{n}$ is globally asymptotically stable if $\lim_{t\rightarrow+\infty} \bfx(\bfx_0,t) =\hat{\bfx}, \forall \bfx_0 \in \mathbb{R}^{n}$. 

{We focus on generating constrained and converging paths for the robot}. A constraint is defined by a smooth scalar function $h(\bfx)$. Using $h(\bfx)$, the set of admissible states $\mathcal{C}$ is defined as  $ \mathcal{C} = \{\bfx : h(\bfx) \geq 0 \}$; $\partial\mathcal{C} = \{\bfx : h(\bfx) = 0 \}$ is the boundary and $I(\mathcal{C}) = \{\bfx : h(\bfx) > 0 \}$ the interior of $\mathcal{C}$. 

\subsection{Zeroing barrier functions}
\begin{definition}
A continuous function $\alpha : (-b,a)\rightarrow (-\infty,\infty)$ is an extended class $\mathcal{K}$ function for some $a,b > 0$ if $\alpha(0)=0$ and $\alpha$ is  strictly increasing.
\end{definition}
\begin{definition}
A smooth function $h:\mathbb{R}^n\rightarrow\mathbb{R}$ is  a {\textbf{zeroing  barrier function}}  (ZBF) for $\dot{\bfx} = \bff(\bfx)$ and for  the admissible set $\mathcal{C}$, 
 if  there  exist an extended class $\mathcal{K}$ function $\alpha$ and a set $\mathcal{D}$ with $\mathcal{C} \subseteq \mathcal{D} \subset \mathbb{R}^n$ such that, for all $\bfx \in \mathcal{D}$, $L_f h(\bfx) \geq -\alpha(h(\bfx))$,
 where the Lie  derivative $L_fh(\bfx)=\frac{\partial h}{\partial\bfx}\bff$.
 \end{definition}
Note that $h$ is defined on a set $\mathcal{D}$ larger than $\mathcal{C}$ to consider  the  effects  of  model  perturbations \cite{Xu15}. 
In special cases, the condition on $L_fh(\bfx)$ simplifies to 
$L_f h(\bfx) \geq -\gamma h(\bfx), \quad \gamma >0$,
which is used in this work. The following lemma from \cite[Lemma 2]{Ames17} defines the conditions for the existence of $\gamma$.
\begin{lemma}
Assume that the compact admissible set $\mathcal{C}$ is nonempty for a smooth function $h(\bfx)$ and consider the DS $\dot{\bfx} = \bff(\bfx)$. If $\dot{h}(\bfx)>0$, $\forall\bfx \in \partial\mathcal{C}$, then for each $k\geq1$, there exists a constant $\gamma>0:\dot{h}(\bfx)\geq -\gamma h^k(\bfx), \forall \bfx \in I(\mathcal{C})$.
\end{lemma}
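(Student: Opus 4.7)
The plan is to exploit compactness together with the strict positivity of $\dot{h}$ on the boundary to produce a uniform bound on the ratio $-\dot{h}(\bfx)/h^k(\bfx)$. This ratio can only blow up as $\bfx \to \partial\mathcal{C}$ (where $h \to 0^+$), but precisely there the numerator becomes negative by assumption, so the ratio stays bounded above on all of $I(\mathcal{C})$.

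First I would split $\mathcal{C}$ into a neighborhood of the boundary and a compact piece bounded away from it. Define $\mathcal{U} = \{\bfx \in \mathbb{R}^n : \dot{h}(\bfx) > 0\}$; by continuity of $\dot{h}$ this set is open, and by hypothesis $\partial\mathcal{C}\subset\mathcal{U}$. On $\mathcal{U}\cap I(\mathcal{C})$ the desired inequality $\dot{h}(\bfx)\geq -\gamma h^k(\bfx)$ holds for every $\gamma>0$, since the right-hand side is strictly negative while the left-hand side is strictly positive.

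Next I would handle the complementary piece $K = \mathcal{C}\setminus\mathcal{U}$. Since $\mathcal{C}$ is closed and compact and $\mathcal{U}$ is open, $K$ is a closed subset of a compact set and hence compact; since $K\cap\partial\mathcal{C}=\emptyset$, we have $K\subset I(\mathcal{C})$, so $h(\bfx)>0$ on $K$. The extreme-value theorem then supplies $h_{\min} := \min_{K} h(\bfx) > 0$ and $\dot{h}_{\min} := \min_{K}\dot{h}(\bfx) \in \mathbb{R}$. If $\dot{h}_{\min}\geq 0$, any $\gamma>0$ already works on $K$; otherwise pick $\gamma = -\dot{h}_{\min}/h_{\min}^k > 0$, so that for every $\bfx\in K$,
\begin{equation*}
-\gamma h^k(\bfx) \;\leq\; -\gamma h_{\min}^k \;=\; \dot{h}_{\min} \;\leq\; \dot{h}(\bfx).
\end{equation*}
Taking the larger of the two candidate constants (which, for $k\geq 1$ given, exists) and noting $I(\mathcal{C}) = (\mathcal{U}\cap I(\mathcal{C}))\cup K$ concludes the argument. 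The degenerate case $I(\mathcal{C})=\emptyset$ is trivial, as the conclusion is vacuous.

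The only subtlety, and what I would flag as the main (mild) obstacle, is ensuring that the partition genuinely isolates the singular behavior of $h^{-k}$ away from the regime where $\dot{h}$ is controlled: this is exactly what the open set $\mathcal{U}$ accomplishes, because it is defined by the sign of $\dot{h}$ itself rather than by geometric distance to $\partial\mathcal{C}$, so the complement $K$ automatically satisfies $\dot{h}\leq 0 \Rightarrow \bfx \notin \mathcal{U} \Rightarrow \bfx \in I(\mathcal{C})$ with $h$ bounded below. No further analytical machinery beyond continuity and compactness is needed.
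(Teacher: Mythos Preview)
Your argument is correct. Note, however, that the paper does not prove this lemma itself; it is quoted from \cite[Lemma~2]{Ames17}, so there is no in-paper proof against which to compare. Your partition of $I(\mathcal{C})$ into the open set $\mathcal{U}=\{\bfx:\dot{h}(\bfx)>0\}$ and the compact complement $K=\mathcal{C}\setminus\mathcal{U}$, followed by the extreme-value theorem on $K$, is a clean and standard route to the bound. One cosmetic remark: the phrase ``taking the larger of the two candidate constants'' is unnecessary, since on $\mathcal{U}\cap I(\mathcal{C})$ every $\gamma>0$ already works; the single constant produced from $K$ (or any positive number if $K=\emptyset$) therefore serves for all of $I(\mathcal{C})$.
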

If a ZBF exists, then $\mathcal{C}$ is forward invariant \cite[Proposit. 1]{Ames17}.


\subsection{Zeroing control barrier functions (ZCBFs)}\label{sebsec:zcbf}
{If the set $\mathcal{C}$ is not forward invariant, it} can be made forward invariant by designing a suitable controller $\bfu \in \mathcal{U}$. Let us consider the control affine system
\begin{equation}
\dot{\bfx} = \bff(\bfx) + \bfG(\bfx)\bfu, \quad \bfy = \bfh(\bfx),
\label{eq:ds_syst_cont}
\end{equation}
where $\bfu \in \mathcal{U} \in \mathbb{R}^m$, $\bfG = [\bfg_1,\ldots,\bfg_m]$, $\bff : \mathbb{R}^{n} \rightarrow \mathbb{R}^{n}$  and $\bfg_i : \mathbb{R}^{n} \rightarrow \mathbb{R}^{n}$ are locally Lipschitz vector fields. For the controlled DS \eqref{eq:ds_syst_cont} it is possible to define a ZCBF.
\begin{definition}
A locally  Lipschitz function $h:\mathbb{R}^n\rightarrow\mathbb{R}$ is a zeroing control barrier function (ZCBF) for the DS \eqref{eq:ds_syst_cont} and the admissible  set $\mathcal{C} \subset \mathbb{R}^{n}$, 
 if the Lie derivatives $L_f h(\bfx)$ and $L_G h(\bfx)=[\frac{\partial h}{\partial \bfx}\bfg_1,\ldots,\frac{\partial h}{\partial \bfx}\bfg_m]$ are locally Lipschitz and there exists an extended class $\mathcal{K}$ function $\alpha$ such that
 $\sup_{\bfu \in \mathcal{U}} \left[L_f h(\bfx) + L_G h(\bfx)\bfu + \alpha(h(\bfx))\right] \geq 0, \forall\bfx \in \mathcal{C}.$
\end{definition}
By applying to \eqref{eq:ds_syst_cont} any Lipschitz continuous control $\bfu \in \mathcal{U}_h$, 
$\mathcal{U}_h = \left\lbrace \bfu \in \mathcal{U} : L_f h(\bfx) + L_G h(\bfx)\bfu + \alpha(h(\bfx)) \geq 0 \right\rbrace$, 
the set $\mathcal{C}$ becomes forward invariant \cite[Corollary 2]{Ames17}. The set $\mathcal{U}_h$ is non-empty if $L_G h(\bfx) \neq \zero$, meaning that $h(\bfx)$ has to be designed such that $\dot{h}(\bfx)$ has a direct dependence on $\bfu$. This is the case for constraints of relative degree one \cite{Isidori95}. 


\section{Motion Planning with Multiple ZCBF}\label{sec:planning}
In this section, we present an approach to enforce multiple constraints on a stable system used to plan robot motion. The DS is rendered globally asymptotically stable by a nominal control input $\bfu_o$. Hence, we derive an approach to merge the stabilizing controller with satisfaction of the constraints solving a quadratic program (QP). A similar problem is considered in \cite{Rauscher16}. The differences are that we use zeroing instead of reciprocal barrier functions and that we constraint the planned motion instead of the robot dynamics.    

\subsection{Multivariate ZCBF}\label{subsec:g_set}
Assume that $c$ constraints are specified {for \eqref{eq:ds_syst_cont} via} the smooth constraint functions $h_i(\bfx)\geq 0, i=1,\ldots,c$. The set of admissible {states {that} fulfill $h_i(\bfx)$} is defined as 
\begin{equation}
\begin{split}
     \mathcal{G} & = \{\bfx : h_i(\bfx) \geq 0, \forall i=1,\ldots,c \}\\
     \partial\mathcal{G} & = \left\lbrace\bfx : \exists 1\leq j \leq c: h_j(\bfx) = 0 \right.  \\ 
    &\qquad\quad\left. \wedge ~h_i(\bfx) \geq 0, \forall i=1,\ldots,c, \right\rbrace \\
     I(\mathcal{G}) & = \{\bfx : h_i(\bfx) > 0, \forall i=1,\ldots,c  \}, 
\end{split}
\label{eq:g_set}
\end{equation}
which is the intersection of the sets $\mathcal{C}_i$ associated with the constraints $h_i(\bfx)$. The set $\mathcal{G}$ is non-empty only if the individual constraints do not conflict. Assume that each $h_i(\bfx)$ is a ZCBF for the set $\mathcal{C}_i$. Any Lipschitz continuous control $\bfu \in \mathcal{U}_{h_i}$, with $\mathcal{U}_{h_i}$ defined as in Sec. \ref{sebsec:zcbf}, makes $\mathcal{C}_i$ forward invariant. The set of admissible control values for   $\mathcal{G}$ is given by the intersection of all $\mathcal{U}_{h_i}$
\begin{equation}
\label{eq:control_set_multi}
\mathcal{U}_{\small\bfh\normalsize} = \{ \bfu \in \mathcal{U}: L_f h_i(\bfx) + L_G h_i(\bfx)\bfu + \gamma_i h_i(\bfx) \geq 0 \},
\end{equation}
where $i=1,\ldots,c $, and $\gamma_i > 0$ are tunable gains associated to each $h_i(\bfx)$. If the set $\mathcal{G}$ is non-empty, i.e. the constraints do not conflict, $L_G h_i(\bfx) \neq \zero$, and all state variables of the system \eqref{eq:ds_syst_cont} are controllable, the following result holds.
\begin{theorem}
\label{th:inv_set}
Assume that the set $\mathcal{G}$ defined by \eqref{eq:g_set} is non-empty and that $h_i(\bfx)$ are the ZCBFs for the sets $\mathcal{C}_i$, for each $i=1,\ldots,c$. Then the set $\mathcal{G}$ can be rendered forward invariant by applying any Lipschitz continuous controller $\bfu \in \mathcal{U}_{\small\bfh\normalsize}$ to the system \eqref{eq:ds_syst_cont}.
\end{theorem}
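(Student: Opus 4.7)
The plan is to reduce the multi-barrier case to the single-barrier result by showing that any controller chosen from the intersection $\mathcal{U}_{\bfh}$ simultaneously satisfies the single-ZCBF admissibility condition for every $h_i$, so each individual set $\mathcal{C}_i$ stays invariant and thus so does the intersection $\mathcal{G}$.

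First I would fix an arbitrary initial condition $\bfx_0 \in \mathcal{G}$ and a Lipschitz continuous controller $\bfu \in \mathcal{U}_{\bfh}$, so that the closed-loop vector field $\bff(\bfx) + \bfG(\bfx)\bfu(\bfx)$ is locally Lipschitz and admits a unique solution $\bfx(\bfx_0,t)$ on $M(\bfx_0)$. By definition of $\mathcal{U}_{\bfh}$ in \eqref{eq:control_set_multi}, we have $L_f h_i(\bfx) + L_G h_i(\bfx)\bfu + \gamma_i h_i(\bfx) \geq 0$ for every $i = 1,\ldots,c$ and every $\bfx \in \mathcal{G}$, which means $\bfu(\bfx) \in \mathcal{U}_{h_i}$ for each $i$.

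Next I would apply the single-ZCBF result from Sec.~\ref{sebsec:zcbf} (Corollary~2 of \cite{Ames17}) once per constraint: since $h_i$ is a ZCBF for the set $\mathcal{C}_i$ and the controller lies in $\mathcal{U}_{h_i}$, the set $\mathcal{C}_i$ is forward invariant under the closed-loop dynamics. Because $\bfx_0 \in \mathcal{G} \subseteq \mathcal{C}_i$, the trajectory satisfies $\bfx(\bfx_0,t) \in \mathcal{C}_i$ for all $t \in M(\bfx_0)$ and all $i$. Taking the intersection over $i$ gives $\bfx(\bfx_0,t) \in \bigcap_{i=1}^{c} \mathcal{C}_i = \mathcal{G}$, establishing forward invariance of $\mathcal{G}$.

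The main obstacle is ensuring that such a Lipschitz continuous $\bfu \in \mathcal{U}_{\bfh}$ actually exists, since this is the content of the theorem and must be guaranteed by the stated assumptions (non-empty $\mathcal{G}$, $L_G h_i(\bfx) \neq \zero$, and full controllability of the state). I would argue that under these hypotheses each pointwise constraint $L_f h_i(\bfx) + L_G h_i(\bfx)\bfu + \gamma_i h_i(\bfx) \geq 0$ defines a non-empty affine half-space in $\bfu$, and that controllability together with non-conflict of the $h_i$'s ensures the intersection of these half-spaces is non-empty for every $\bfx \in \mathcal{G}$; a Lipschitz selection can then be constructed, for instance, via the QP-based pointwise-minimum-norm controller used later in the paper, whose solution is known to depend Lipschitz-continuously on the state when the Lie derivatives $L_f h_i$ and $L_G h_i$ are Lipschitz. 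With existence established, the invariance conclusion follows directly from the per-constraint argument above.
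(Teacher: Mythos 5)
Your argument is essentially the paper's own proof: you observe that membership in $\mathcal{U}_{\bfh}$ means membership in each $\mathcal{U}_{h_i}$, invoke Corollary 2 of \cite{Ames17} per constraint to get forward invariance of each $\mathcal{C}_i$, and conclude invariance of the intersection $\mathcal{G}$; your extra remarks on the existence of a Lipschitz selection simply make explicit the non-emptiness claim that the paper asserts from its standing assumptions (non-conflicting constraints, $L_G h_i \neq \zero$, controllability). The proposal is correct and takes the same route as the paper.
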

\begin{proof}
The set $\mathcal{U}_{\small\bfh\normalsize}$ is non-empty and therefore a $\bfu \in \mathcal{U}_{\small\bfh\normalsize}$ exists. If $\bfu \in \mathcal{U}_{\small\bfh\normalsize}$, it holds that $\bfu \in \mathcal{U}_{h_i}, i=1,\ldots,c$. This is because the set $\mathcal{U}_{\small\bfh\normalsize}$ is obtained by intersecting all the sets $\mathcal{U}_{h_i}$. Hence, $\bfu$ renders each $\mathcal{C}_i$ forward invariant \cite[Corollary 2]{Ames17}. Being the intersection of $\mathcal{C}_i$ invariant sets an invariant set,  $\bfu \in \mathcal{U}_{\small\bfh\normalsize}$ renders $\mathcal{G}$ forward invariant.
\end{proof}

\subsection{Controller design}
Theorem \ref{th:inv_set} allows to use any control input $\bfu \in \mathcal{U}_{\small\bfh\normalsize}$ to render {$\mathcal{G}$} forward invariant. For states belonging to $I(\mathcal{G})$, where the constraints are not violated, the DS should be controlled by the stabilizing input $\bfu_o$ to achieve desired performance. A possible way to combine these control objectives consists in solving the {quadratic program} 
\begin{equation}
\label{eq:optimal_problem}
{\min_{\bfu \in \mathbb{R}^m} \Vert \bfu - \bfu_o \Vert^2, \quad \text{s.t.} \quad \bfA(\bfx)\bfu \preceq \bfb(\bfx)},
\end{equation}
where the symbol $\preceq$ indicates element-wise inequalities, and
   \begin{equation}
\label{eq:A_n}
\begin{split}
\bfA(\bfx) &= -[L_G h_i] = -\begin{bmatrix} L_{g_1} h_1 & \cdots & L_{g_m} h_1 \\
                                           \vdots & & \vdots \\
											L_{g_1} h_c & \cdots & L_{g_m} h_c
							\end{bmatrix} \\							
\bfb(\bfx) &= [L_f h_i + \gamma_i h_i] = \begin{bmatrix}L_f h_1 +  \gamma_1 h_1 \\
                                           \vdots \\
											L_f h_c +  \gamma_c h_c
							\end{bmatrix}.							
\end{split}
\end{equation}

\begin{theorem}\label{th:optimal_solution}
Consider  \eqref{eq:optimal_problem} and the control affine system \eqref{eq:ds_syst_cont} subject to $c$ state constraints with associated zeroing control barrier functions $h_i(\bfx), i=1,\ldots,c$. Assume that the rows of $\bfA(\bfx)$ associated to active inequality constraints are linearly independent and that a locally Lipschitz control input $\bfu_o \in \mathcal{U}$ is given for \eqref{eq:ds_syst_cont}. If the set $\mathcal{U}_{\small\bfh\normalsize}$ in \eqref{eq:control_set_multi} is non-empty, then the control law $\bfu^*(\bfx)$ obtained by solving the optimization problem \eqref{eq:optimal_problem} is Lipschitz continuous and renders the set $\mathcal{G}$ forward invariant.
\end{theorem}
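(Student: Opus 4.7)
The plan is to split the claim into its two components: (i) feasibility of the QP and forward invariance of $\mathcal{G}$ under $\bfu^*$, and (ii) local Lipschitz continuity of the map $\bfx \mapsto \bfu^*(\bfx)$. Part (i) is essentially a corollary of Theorem \ref{th:inv_set}, once we know $\bfu^*$ is Lipschitz. Indeed, the constraint set of \eqref{eq:optimal_problem} is exactly $\mathcal{U}_{\bfh}$ (rewriting $\bfA(\bfx)\bfu \preceq \bfb(\bfx)$ as $L_f h_i + L_G h_i\bfu + \gamma_i h_i \geq 0$ for all $i$), and this set is non-empty by hypothesis, so the QP is feasible. The objective is strictly convex (Hessian equal to the identity), so whenever a minimizer exists it is unique and satisfies $\bfu^*(\bfx) \in \mathcal{U}_{\bfh}$; by Theorem \ref{th:inv_set}, any locally Lipschitz selection from $\mathcal{U}_{\bfh}$ renders $\mathcal{G}$ forward invariant.

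For part (ii) I would work pointwise via the KKT conditions. Let $\mathcal{A}(\bfx)$ be the active index set at $\bfx$, and let $\bfA_{\mathcal{A}}$, $\bfb_{\mathcal{A}}$ denote the corresponding rows of \eqref{eq:A_n}. Stationarity reads $\bfu^* - \bfu_o + \bfA^\top \boldsymbol\lambda = \zero$, complementary slackness forces $\lambda_i = 0$ for inactive $i$, and $\bfA_{\mathcal{A}} \bfu^* = \bfb_{\mathcal{A}}$. Substituting and using the LICQ hypothesis (rows of $\bfA_{\mathcal{A}}$ linearly independent, so $\bfA_{\mathcal{A}}\bfA_{\mathcal{A}}^\top$ is invertible), I would solve explicitly
\begin{equation*}
\boldsymbol\lambda_{\mathcal{A}} = \bigl(\bfA_{\mathcal{A}}\bfA_{\mathcal{A}}^\top\bigr)^{-1}\bigl(\bfA_{\mathcal{A}}\bfu_o - \bfb_{\mathcal{A}}\bigr), \qquad \bfu^* = \bfu_o - \bfA_{\mathcal{A}}^\top\boldsymbol\lambda_{\mathcal{A}}.
\end{equation*}
On an open neighborhood of $\bfx$ in which the active set remains fixed, this formula is a composition of the locally Lipschitz maps $\bfu_o$, $L_f h_i$, $L_G h_i$ (the latter two by definition of a ZCBF), the constant $\gamma_i$, and the smooth inversion $M \mapsto M^{-1}$ on the open set of invertible matrices; therefore $\bfu^*$ is locally Lipschitz on that neighborhood.

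The main obstacle is continuity across points $\bfx$ where the active set changes: a constraint may become active or inactive, so the algebraic formula above switches piece. The idea here is that at a transition point $\bar\bfx$ the constraint in question has $h_i(\bar\bfx)$ giving a boundary value $\bfA_i\bfu^* = \bfb_i$ whether we include $i$ in $\mathcal{A}$ or not, and the corresponding multiplier $\lambda_i$ crosses zero continuously; under LICQ the two piecewise expressions for $\bfu^*$ agree at $\bar\bfx$. Combined with the fact that $\bfu^*(\bfx)$ is the Euclidean projection of $\bfu_o(\bfx)$ onto the polyhedron $\{\bfu:\bfA(\bfx)\bfu\preceq\bfb(\bfx)\}$, which is non-expansive in $\bfu_o$ and, by standard parametric QP results (e.g.\ Hager's theorem on solutions of strongly convex QPs with Lipschitz data and LICQ), Lipschitz in the polyhedron's coefficients $(\bfA,\bfb)$, one concludes that $\bfu^*$ is Lipschitz in $\bfx$ through the Lipschitz dependence of $L_f h_i$, $L_G h_i$, $h_i$, and $\bfu_o$ on $\bfx$. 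With this, feeding $\bfu^*$ into Theorem \ref{th:inv_set} completes the proof.
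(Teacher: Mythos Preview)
Your proposal is correct and follows essentially the same route as the paper: establish that \eqref{eq:optimal_problem} has a unique, locally Lipschitz solution using LICQ and the Lipschitz dependence of $\bfA(\bfx)$, $\bfb(\bfx)$, $\bfu_o$ on $\bfx$, then invoke Theorem~\ref{th:inv_set} for forward invariance. The only difference is cosmetic: where you unpack the KKT system, derive the closed-form $\bfu^* = \bfu_o - \bfA_{\mathcal{A}}^\top(\bfA_{\mathcal{A}}\bfA_{\mathcal{A}}^\top)^{-1}(\bfA_{\mathcal{A}}\bfu_o - \bfb_{\mathcal{A}})$, and appeal to Hager-type parametric QP stability for the active-set transitions, the paper simply cites \cite{Morris13} for the statement that the unique QP solution is Lipschitz whenever the data $\bfA$, $\bfb$, $\bfu_o$ are Lipschitz.
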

\begin{proof}
Let us first prove that the solution $\bfu^*(\bfx)$ of \eqref{eq:optimal_problem} is unique and Lipschitz continuous. The rows of $\bfA(\bfx)$ associated to active inequality constraints are linearly independent by assumption. Moreover, it is assumed that the set $\mathcal{U}_{\small\bfh\normalsize}$ is non-empty, which implies that $L_G h_i(\bfx) \neq \zero$. Recalling that each row of $\bfA(\bfx)$ has the form $L_G h_i(\bfx) = \frac{\partial h_i}{\partial \bfx}\bfG(\bfx)$, we conclude that the gradients of the active constraints are linearly independent. Hence, the linear independence constraint qualification is satisfied and the problem \eqref{eq:optimal_problem} admits a unique solution \cite{Nocedal06}. The unique solution of \eqref{eq:optimal_problem} is Lipschitz continuous if $\bfA(\bfx)$, $\bfb(\bfx)$, and $\bfu_o$ are Lipschitz continuous at $\bfx$ \cite[Theorem 1]{Morris13}. $\bfu_o$ is locally Lipschitz by assumption. $\bfA(\bfx)$ and $\bfb(\bfx)$ are Lipschitz continuous for all $\bfx \in \mathcal{G}$ because $h_i$, $L_f h_i$, and $L_G h_i$ are locally Lipschitz (see Definition 3). Hence, the solution $\bfu^*(\bfx)$ of \eqref{eq:optimal_problem} is unique and Lipschitz continuous for all $\bfx \in \mathcal{G}$.

Given that $\bfu^*(\bfx)$ is Lipschitz continuous and that the constraints in \eqref{eq:optimal_problem} force $\bfu^*(\bfx)$ to lie in $\mathcal{U}_{\small\bfh\normalsize}$, the set $\mathcal{G}$ is forward invariant according to Theorem  \ref{th:inv_set}.
\end{proof}
\begin{remark}
Theorem \ref{th:optimal_solution} can be eventually proved using the weaker Mangasarian--Fromovitz constraint qualification \cite{Nocedal06}. The proof is analogous to \cite[Theorem 1]{Rauscher16}, but considering ZCBFs instead of reciprocal control barrier functions. 
\end{remark}
\begin{remark}
The control input computed by solving \eqref{eq:optimal_problem} renders $\mathcal{G}$ forward invariant even if the DS \eqref{eq:ds_syst_cont} is unstable. Asymptotic properties of the unconstrained DS are preserved if their objectives do not conflict with the forward invariance. For instance, the convergence to an equilibrium $\hat{\bfx}$ is preserved if $\hat{\bfx}$ is an admissible state, i.e. $\hat{\bfx} \in \mathcal{G}$. 
\end{remark}
\begin{remark}
The analytic solution of \eqref{eq:optimal_problem} is 
$\bfu^* = \bfA_c\left(\bfA_c \bfA_c\tr \right)^{-1}(\bfb_c - \bfA_c \bfu_o) + \bfu_o$,
where $\bfA_c$ contains only the rows of $\bfA$ and the vector $\bfb_c$ only the elements of $\bfb$ corresponding to active constrains.
\end{remark}

\subsection{Simulation results}\label{subsec:simulation}
We present some results on simulated DS to show the capabilities of the approach for constrained motion generation with multiple ZCBF and to clarify some important aspects of Theorem \ref{th:optimal_solution}. We consider $3$ planar dynamical systems, namely a stable DS obtained with the approach in \cite{SEDS} (Fig. \ref{fig:ds_simulation}(a)), the unstable DS $\dot{\bfx} = 2\bfx$ (Fig. \ref{fig:ds_simulation}(b)), and the stable limit cycle in Fig. \ref{fig:ds_simulation}(c)-(d) defined as $\dot{x}_1 = x_2 - x_1 (x_1^2 + x_2^2 -1)$, $\dot{x}_2 = -x_1 - x_2 (x_1^2 + x_2^2 -1)$.    
The nominal control input is $\bfu_o = [0,0]\tr$ and the matrix $\bfG = \bfI_{2\times2}$ for all the considered systems. The set of admissible states $\mathcal{G}$ is the intersection of $4$ linear constraint functions in the form $h_i = n_{i,1}x_1 + n_{i,2}x_2+a_i, i=1,\ldots,4$. Each DS has an equilibrium point (stable for the first, unstable for the others) at $\hat{\bfx}=[0,0]\tr$, {that} is an admissible state by construction. The $i$-th constraint is active if $L_fh_i(\bfx) + \gamma_i h_i(\bfx) \leq 0$. We set $\gamma_i = 10, i=1,\ldots,4$. The control input $\bfu^*$ that renders $\mathcal{G}$ forward invariant is computed as in Remark 3.

Results in Fig. \ref{fig:ds_simulation} show that, in all the considered cases, the  control input $\bfu^*$ renders the set $\mathcal{G}$ forward invariant. This is an expected result since the assumptions of Theorem \ref{th:optimal_solution} are matched. In Fig. \ref{fig:ds_simulation}(a), we notice that all the trajectories asymptotically converge to the stable equilibrium $\hat{\bfx}$. Indeed, being $\hat{\bfx}$ an admissible set, Remark 2 holds. Intuitively, we can say that if the unconstrained system is stable or stabilized through control at $\hat{\bfx}$, moving along the frontier of $\mathcal{G}$ sooner or later the DS generates a flow (velocity) pointing towards the interior of $\mathcal{G}$ where only the stabilizing input $\bfu_o$ is active. Then, the DS is driven towards the equilibrium point. Figure~\ref{fig:ds_simulation}(b) shows that $\mathcal{G}$ is forward invariant also for originally unstable dynamics (Remark 2). In this case, we do not expect that the control input $\bfu^*$ guarantees asymptotic convergence to a unique equilibrium, but we know that trajectories starting in $\mathcal{G}$ are bounded to $\mathcal{G}$. Finally, Theorem \ref{th:optimal_solution} does not guarantee the existence of periodic orbits within $\mathcal{G}$, even if the original DS has a limit cycle trajectory {(Fig.~\ref{fig:ds_simulation}(c)-(d))}. Depending on the shape of the constraints and the limit cycle dynamics, the constrained system may exhibit a periodic trajectory within $\mathcal{G}$ (Fig.~\ref{fig:ds_simulation}(c)) or not (Fig.~\ref{fig:ds_simulation}(d)).  
\begin{figure}[t!]
	\centering
	\includegraphics[width=\columnwidth]{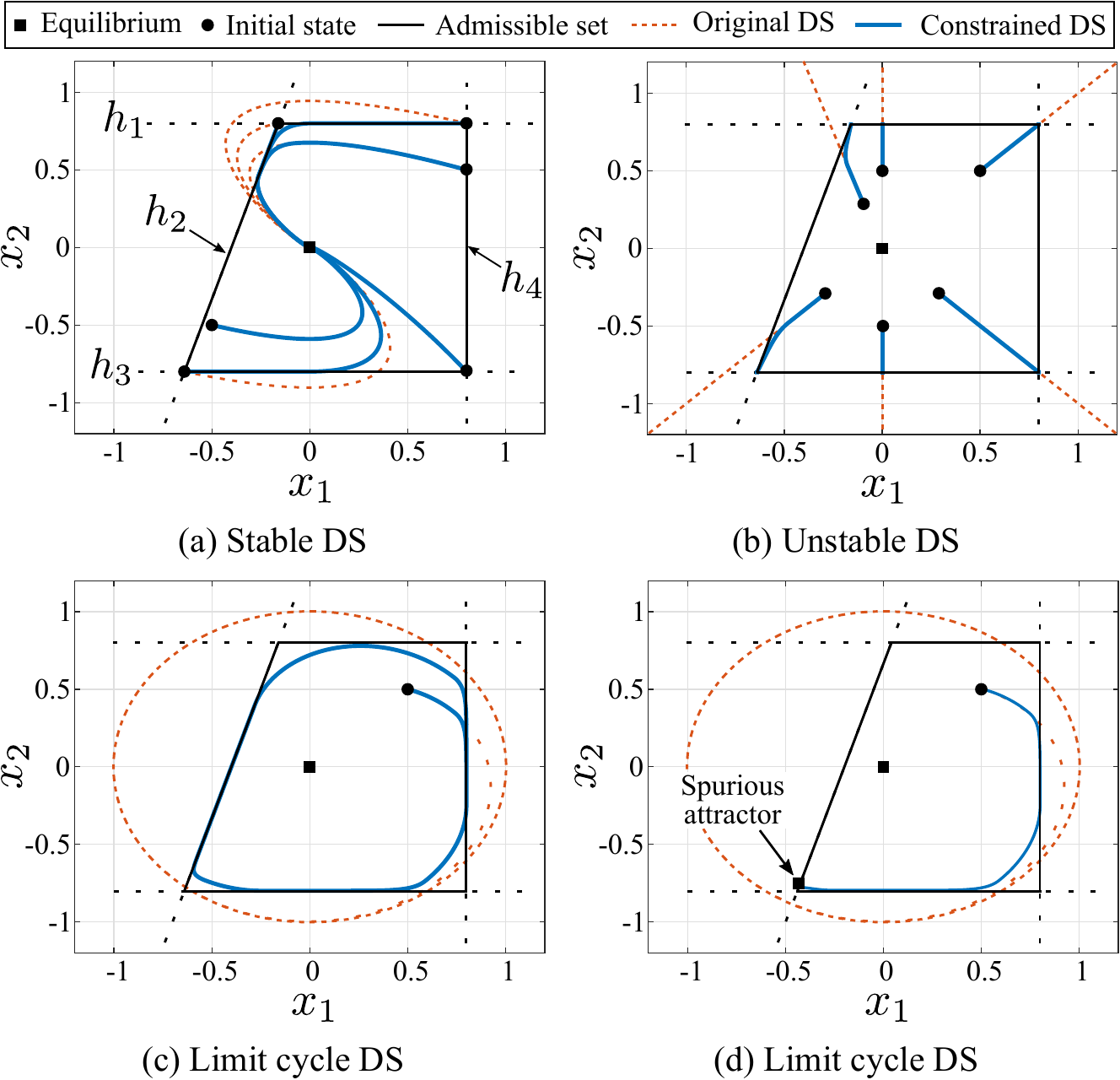}
	\caption{Results obtained by constraining the dynamics of three different systems within a convex region. {The four barrier functions are: $h_1 = x_1 - 0.8$, $h_2 = -x_1 + 0.3x_2 - 0.4$, $h_3 = -x_1 + 0.8$, and $h_4 = x_2 - 0.8$.}}
	\label{fig:ds_simulation}
\end{figure} 

\section{Incremental Learning of Barrier Functions}\label{sec:learning}

\subsection{Parametric ZCBF}\label{subsec:linear_zcfb}
As discussed in Sec. \ref{sec:planning}, zeroing control barrier functions have to match several requirements. In order to define a suitable parametric form for a ZCBF, we assume that the constraints are represented with a linear function
\begin{equation}
\label{eq:planar_zcbf}
h(\bfx) = \bfn\tr \bfx + a,
\end{equation} 
where the normal vector {$\bfn \in \mathbb{R}^n$} and the scalar $a$ are learnable parameters. {In practice, a linear function as that in \eqref{eq:planar_zcbf} represents a straight line if $n=2$ or a (hyper-)plane if $ n> 2$}.  Under mild assumptions, the linear constraint function \eqref{eq:planar_zcbf} represents a  ZCBF. Indeed, it is easy to verify that $h(\bfx)$ has the following properties:
\begin{enumerate}
\item $h(\bfx)$ is Lipschitz continuous.
\item The gradient of $h(\bfx)$, namely $\bfn\tr$, is constant and therefore Lipschitz continuous.
\item The Lie derivatives $L_f h(\bfx) = \bfn\tr \bff$ and $L_G h(\bfx) = \bfn\tr \bfG$ are Lipschitz continuous since $\bff$ and $\bfG$ are  Lipschitz continuous by assumption.
\item $L_G h(\bfx) \neq \zero$ if $\bfG$ has full column rank.
\item The gradients of two active constraints $h_1(\bfx)$ and $h_2(\bfx)$ are linearly dependent iff $\bfn_1\tr \bfn_2 = \bfn_2\tr \bfn_1 = \pm 1$, i.e. the normal vectors $\bfn_1$ and $\bfn_2$ are (anti-)parallel.
\end{enumerate}
A set of admissible states $\mathcal{G}$, defined as the intersection of $c$ linear constraints \eqref{eq:planar_zcbf} satisfying properties 1)--5), can be rendered forward invariant according to Theorem \ref{th:optimal_solution}. Moreover, properties 1) and 2) always hold, while properties 3) and 4) depend on the given DS. Hence, an algorithm that learns the constraints has to simply check that the normal of the active constraints are not aligned for property 5) to hold.

\subsection{Incremental learning of ZCBFs}\label{subsec:incremental_zcbf}
We aim at simultaneously clustering the data representing state constraints into multiple subspaces and finding a low-dimensional embedding (the linear ZBCF) for each cluster of points. This problem is known as \textit{subspace clustering} \cite{Vidal11}. {An} algorithm for clustering point clouds into planar subregions is proposed in \cite{Donnarumma12}. {This approach assumes that {hundreds of points are available} for each frame. The set of points is clustered into different planes based on the point to plane distances. In our setting, one point (robot position) is available for each time step. Therefore, we propose an approach simpler and faster than \cite{Donnarumma12} that is well-suited for online applications. The proposed  clustering algorithm is described as follows} and summarized in Algorithm \ref{alg:learning}.

\begin{algorithm}[t]\caption{Incremental learning of multiple ZCBFs}\algsetup{linenodelimiter=.}
	\label{alg:learning}
    \begin{algorithmic}[1]
    \REQUIRE {$\bfx_{t}$}, {$\bfX$}, $H = \{h_i(\bfx)\}_{i=1}^c$, $\delta$ \COMMENT{training point at time $t$, matrix of unlabeled points, set of learned ZCBF, threshold}
    \IF{$H$ is empty}
		\STATE {Add an empty column to {$\bfX$} and set it equal to {$\bfx_{t}$}} 
	\ELSE
	    \IF{{$-\delta\leq h_i({\bfx_{t}}) \leq \delta $} and $h_j({\bfx_{t}}) >  \delta, \forall i \neq j$}
			\STATE ${\bfx_{t}}$ lies on $h_{i}$ (eventually refine $\bfn_i$ and $a_i$)
		\ELSIF{$h_i({\bfx_{t}}) < {-\delta}$ for any $i=1,\ldots,c$}
			\STATE Remove $h_i$ from $H$ \COMMENT{User pushed the robot beyond the barrier}
		\ELSIF{$h_i({\bfx_{t}}) > \delta, \forall i=1,\ldots,c$}
			\STATE Add {$\bfx_{t}$} to {$\bfX$} \COMMENT{The new point belongs to a new cluster}
		\ENDIF
	\ENDIF
	\IF{{$\bfX$} contains $n$ points}
		\STATE Fit $h_{c+1}(\bfx)$, add $h_{c+1}(\bfx)$ to $H$, empty $\bfX$
	\ENDIF
	\OUTPUT {$\bfX$}, $H$
    \end{algorithmic}
\end{algorithm} 


{
In our setting, a new training point $\bfx_t$ arrives for each time step $t$. Assume that at current time $t$ no barrier has been learned yet. In this case, {a new column is added to the matrix $\bfX$ and set equal to the training point $\bfx_t$ (lines $1$--$3$ in Algorithm \ref{alg:learning}). The algorithm waits for the next point.} As soon as the matrix $\bfX$ contains $n$ points, where $n$ is the dimension of the state space, it is possible to fit a linear ZCBF in the form of $h_1(\bfx) = \bfn_1\tr\bfx + a_1$.} To this end, we perform a principal component analysis on $\bfX$ and set {the normal vector} $\bfn_1 = \bfe_n$ and {the scalar} $a_1 = -\bfn_1\tr\bfc$, where $\bfe_n$ is the $n$-th principal component {of $\bfX$} and $\bfc$ is the centroid of the points in {$\bfX$}. To ensure that the goal $\hat{\bfx}$ is an admissible point, we check the sign of $h_1(\hat{\bfx})$ and set $\bfn_1 = -\bfn_1$ and $a_1 = - a_1$ if $h_1(\hat{\bfx})<0$ (from {$L_f h(\bfx) \geq -\gamma h(\bfx)$ with} $\gamma > 0$ and $L_f h_1(\hat{\bfx})=0$).  
The barrier function $h_1(\bfx)$ defines a set of admissible states $\mathcal{C}_1$. 

{
Let us now assume that the barrier function $h_1(\bfx)$ has been learned. At time step $t$ a new training point $\bfx_{t}$ arrives. We consider three possible cases: \textit{i)} $\bfx_t$ lies on the plane described by $h_1$, \textit{ii)} $\bfx_t$ lies outside $\mathcal{C}_1$, and \textit{iii)} $\bfx_t$ lies inside $\mathcal{C}_1$ but not on $h_1$. The three cases can be discriminated considering that $h_1(\bfx_{t}) = \bfn_1\tr\bfx_t + a_1$, with $\Vert \bfn_1 \Vert = 1$, is the signed distance between $\bfx_{t}$ and the plane described by $h_1$. Looking at the value of $h_1(\bfx_{t})$, we have three cases:}
 \begin{enumerate}
 \item[\textit{i)}] $h_1(\bfx_{t}) = 0$ implies that {$\bfx_{t}$ lies on the plane described by} $h_1$. We can eventually refine $\bfn_1$ and $a_1$ by using incremental principal component analysis as in \cite{Donnarumma12}. 
 \item[\textit{ii)}] $h_1(\bfx_{t}) < 0 $ implies that {$\bfx_{t}$} is outside $\mathcal{C}_1$ and is a non-admissible state. Being $\mathcal{C}_1$ forward invariant, $h_1(\bfx_{t}) < 0 $ indicates that an external disturbance (the user) is pushing the robot beyond the barrier. We use this interaction modality to remove {$h_1$}. 
 \item[\textit{iii)}] $h_1(\bfx_{t}) > 0 $ implies that {$\bfx_{t}$} is an admissible point {($\bfx_{t} \in \mathcal{C}_1$)} that does not lie on the {plane described by} $h_1$. It is stored in the matrix {$\bfX$}.
\end{enumerate} 
{ 
In case \textit{iii)}, the point {$\bfx_t$} is admissible but distant from $h_1$. This is interpreted as the intention of the user to show a different barrier and $\bfx_t$ is stored in $\bfX$. If {$\bfX$} contains $n$ points, we fit a new barrier function $h_2(\bfx)$ with associated set of admissible states $\mathcal{C}_2$. Otherwise, the algorithm waits for the next point.} As discussed in Sec. \ref{subsec:g_set}, the new admissible set is the intersection of $\mathcal{C}_1$ and $\mathcal{C}_2$. The described procedure is repeated for every new training point. Note that case \textit{i)} avoids that two active constraints have their normals aligned. Hence, proposition 5) in Sec. \ref{subsec:linear_zcfb} is satisfied and the learned set of admissible states can be rendered forward invariant. 

{The algorithm generalizes to more than two barriers. Given $c$ barriers and a new point $\bfx_t$, we check if: \textit{i)} $\bfx_t$ is admissible and belongs to one of the $h_i$ (line $4$ in Algorithm \ref{alg:learning}), \textit{ii)} $\bfx_t$ is not admissible, i.e. $h_i(\bfx_t) < \delta$ at least for one $i$ (line $6$), or \textit{iii)} $\bfx_t$ is admissible and far from all the barriers $h_i(\bfx_t) < \delta$, $\forall i=1,\ldots,c$ (line $8$). Note that the threshold $\delta > 0$ is used in Algorithm \ref{alg:learning} to discriminate between the three cases. The threshold is needed in real cases to cope with possible uncertainties like noise in the measurements or imperfect robot control. The value of $\delta$ is determined considering that  $h_i(\bfx_t)$ represents the distance between $\bfx_t$ and the $i$-th barrier. The threshold also triggers the creation of a new barrier if points are distant from existing barriers. In this way, the boundary of a non-linear domain is approximated with a set of linear functions (Fig. \ref{fig:learn_alg_res}(a)). Finally, the algorithm works properly if $n$ consecutive training points represent the surface of the boundary. Otherwise, the barrier may deviate from the boundary as in Fig. \ref{fig:learn_alg_res}(b). In Sec. \ref{sec:rob_exp}, kinestetic teaching is used to collect proper training data.
\begin{figure}[t]
	\centering
	\subfigure[]{\includegraphics[width=0.35\columnwidth]{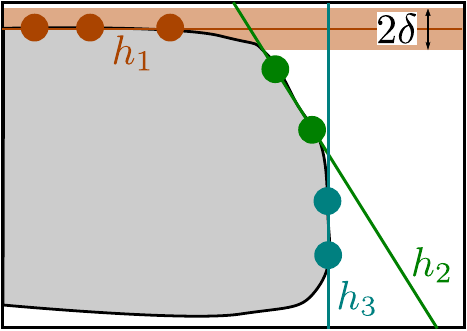}}
	\subfigure[]{\includegraphics[width=0.35\columnwidth]{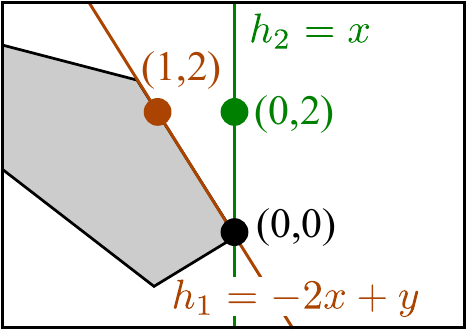}}
    	\caption{{(a) A non-linear boundary represented with several linear barriers. (b) Barriers learned with two different sets of points. }}
    	\label{fig:learn_alg_res}
\end{figure} 
}

\section{Experimental Results}\label{sec:experiments}
\subsection{{Comparative results}}\label{subsec:comparison}
{The approach presented in Sec. \ref{sec:planning} renders a convex set forward invariant. The same problem can be solved with other techniques like Reciprocal Control Barrier Functions (RCBF) \cite{Rauscher16} and Constrained Optimization (CO). Hence, we present an experimental comparison to underline similarities and differences among the three approaches. We consider the SEDS system used  in Fig. \ref{fig:ds_simulation}(a) and the three barrier functions $h_1 = x_1 - 0.8$, $h_2 = -x_1 + 0.3x_2 - 0.4$, and $h_3 = -x_1 + 0.8$.} 
\subsubsection*{{ZCBF and constrained optimization (CO)}}
{In the considered scenario, it holds that $\bfu_o = \zero$ (stable DS) and $\bfG(\bfx) = \bfI$. The $3$ barriers form the convex set $\mathcal{C} =\{\bfx| \bfN\bfx\preceq\bfa\}$ with $\bfN=-[\bfn_1,\bfn_2,\bfn_3]\tr$ and $\bfa=[a_1,a_2,a_3]\tr$. To render $\mathcal{C}$ forward invariant, we compute a control input by solving
\begin{equation*}
\min_{\bfu \in \mathbb{R}^m} 0.5\Vert \bfu\Vert^2, ~
\text{s.t.} ~ \bfN\int_0^t \dot{\bfx}dt = \bfN\int_0^t \left(\bff(\bfx) + \bfu\right)dt \preceq \bfa.
\end{equation*}
This CO problem is numerically solved in Matlab using the interior-point algorithm \cite{Nocedal06}. As shown in Fig. \ref{fig:zcbf_vs_rcbf_vs_co}(a), ZCBF and CO work comparably well for trajectories starting within the admissible set, guaranteeing forward invariance and convergence to the goal. However, in the considered 2D setup, CO takes about $0.01\,$s to plan the next position while ZCBF, for which an analytic solution exists, takes less than a millisecond ($0.2\,$ms).  Results in Fig. \ref{fig:zcbf_vs_rcbf_vs_co}(b) show that both approaches drive trajectories starting outside  $\mathcal{G}$ within the set. However, the controller obtained with CO is more aggressive and pushes the state within $\mathcal{G}$ in one step with consequent high velocity. Although it is possible to consider bounds on the control input ($\bfu_{min} \preceq \bfu \preceq \bfu_{max}$), this will make the problem harder and increase the computation time. On the contrary, the controller obtained with ZCBF smoothly drives the state within $\mathcal{G}$, which is preferable in real applications.
}

\begin{figure}[t!]
	\centering
	\includegraphics[width=\columnwidth]{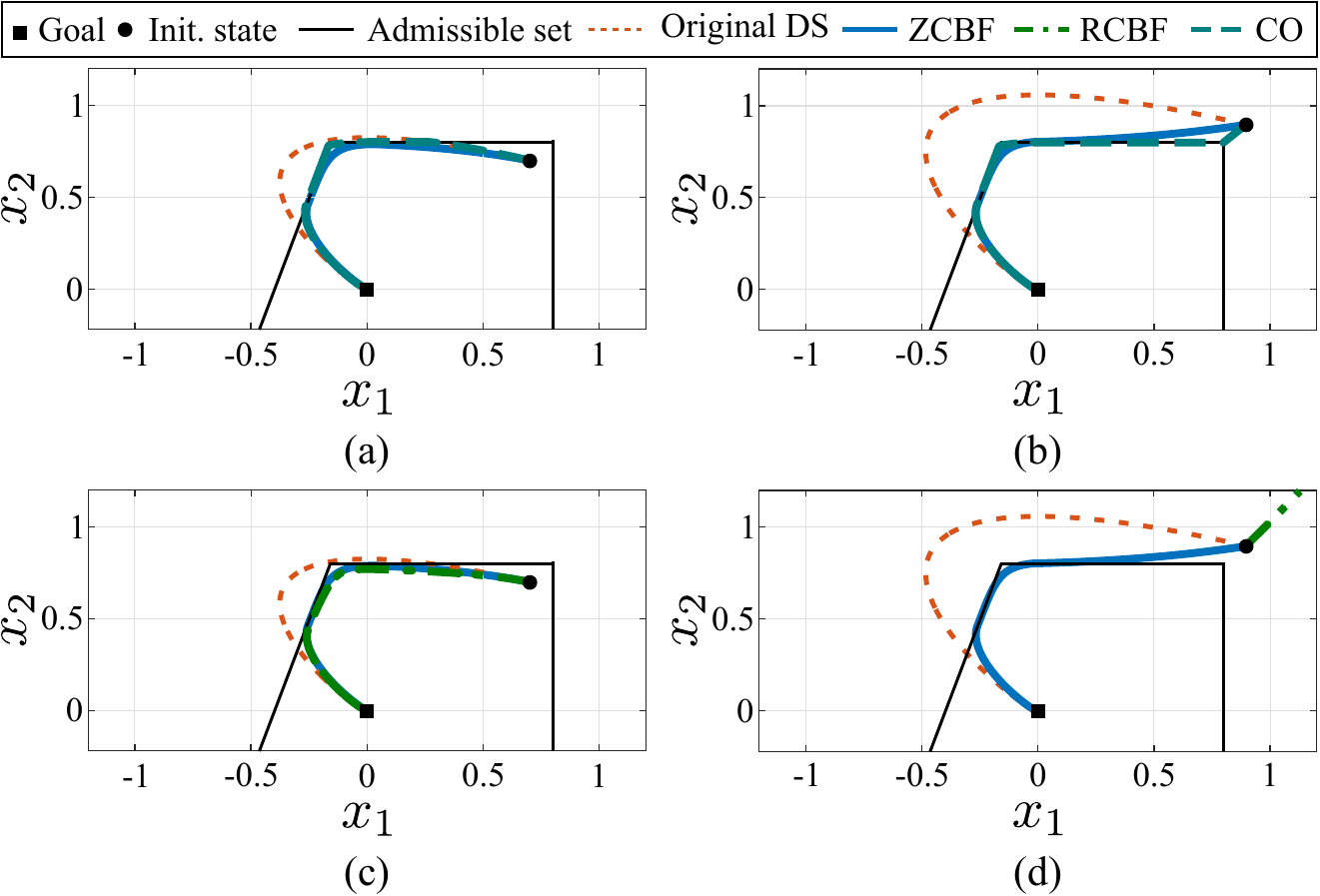}
	\caption{{Comparison of ZCBF with (a)--(b) Constrained Optimization (CO) and (c)--(d) Reciprocal Control Barrier Function (RCBF) approaches.}}
	\label{fig:zcbf_vs_rcbf_vs_co}
\end{figure} 

\subsubsection*{{ZCBF and RCBF}}
{
For RCBF, we use the common barrier $R_i = -\log\left( \frac{h_i(\bfx)}{1+h_i(\bfx)}\right)$ \cite{Rauscher16,Ames17} that diverges for $h_i \rightarrow 0$ and is undefined for $-1 \leq h_i \leq 0$. Results in Fig. \ref{fig:zcbf_vs_rcbf_vs_co}(c) show that the two approaches have a similar behavior for trajectories starting within the admissible set, guaranteeing forward invariance of  $\mathcal{G}$ and convergence to the goal. They also perform similarly in terms of computation time because also RCBF admits an analytic solution. The problem with RCBF appears when the trajectory starts outside the admissible set, where at least one of the $R_i$ is undefined. In Fig.  \ref{fig:zcbf_vs_rcbf_vs_co}(d), both $R_1$ and $R_3$ are undefined because $h_1, h_3<0$. This causes the trajectory to diverge from the admissible set. In real applications, uncertainties due to noise or imperfect control may drive the trajectory slightly outside the admissible set. In this cases, ZCBF smoothly drives the trajectory within  $\mathcal{G}$, showing its robustness to uncertainties. } 

\begin{figure*}[t!]
	\centering
	\includegraphics[width=0.96\textwidth]{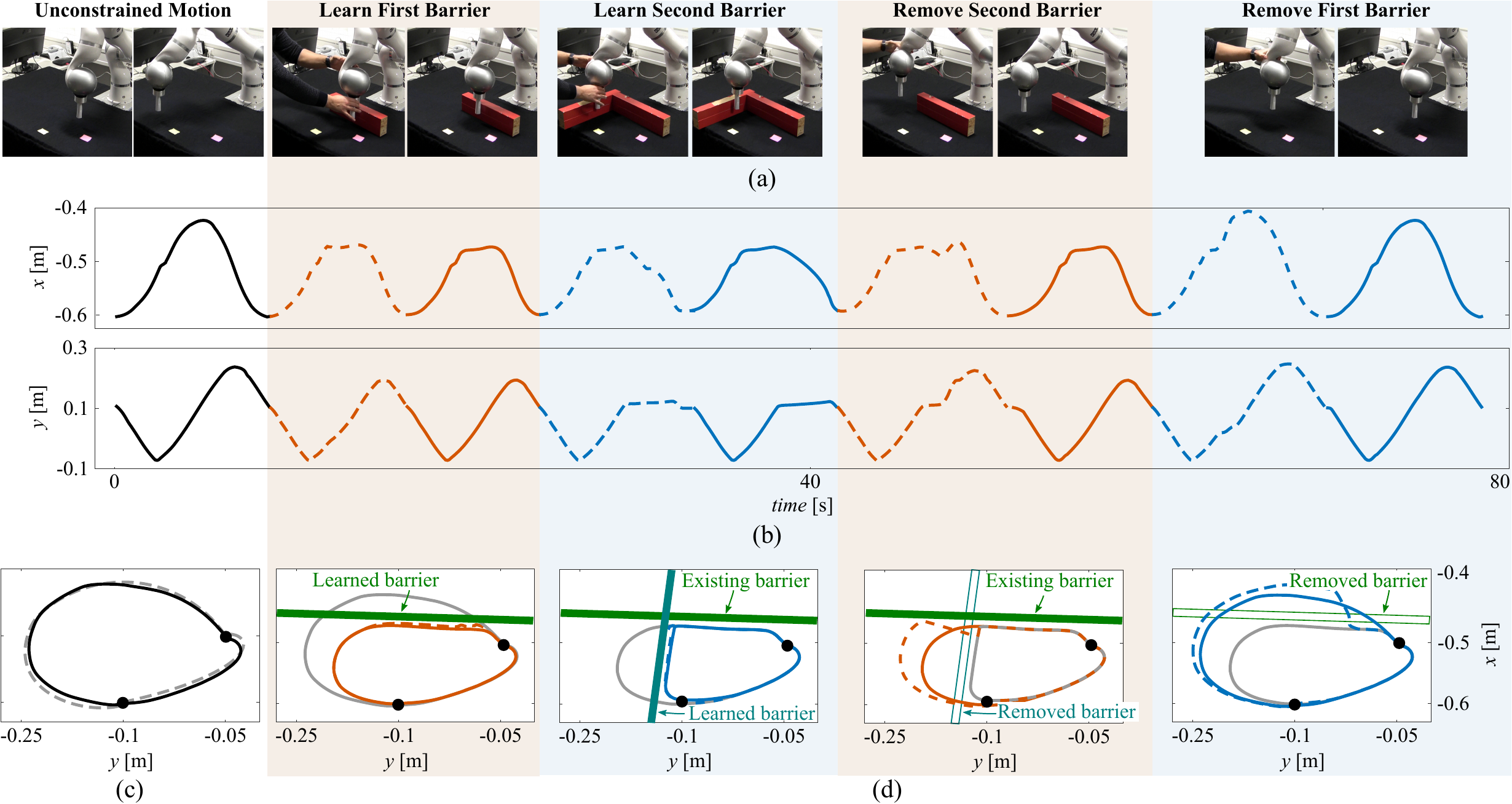}
	\caption{Incremental learning of zeroing barrier functions. (a) Snapshots of the incremental learning process. (b) The robot position ($x$ and $y$ over time) during the experiment. Dashed lines indicate kinesthetic teaching phases, solid lines represent the robot motion without external disturbances. (c) A comparison of the desired (grey dashed line) and executed (solid black line) trajectory in the unconstrained case. (d) The robot position in the $x$-$y$ plane during the experiment. Dashed lines indicate kinesthetic teaching phases, solid lines represent the robot motion after the teaching, and solid grey lines represent the robot motion before the teaching. Note that only $x$ and $y$ coordinates are shown because the learning process has no effect on $z$.}
	\label{fig:result}
\end{figure*} 

\subsection{Robot experiments}\label{sec:rob_exp}
\subsubsection*{{Setup and data collection}}
The proposed approach is tested on a KUKA LWR IV+ manipulator controlled at $500\,$Hz through the fast research interface (FRI) \cite{FRI}. The robot position is controlled with a Cartesian impedance controller, while the orientation is kept fixed. The stiffness matrix is $\bfK = k\bfI_{3\times 3}$, where $k=1000$ to have a relatively stiff robot that accurately tracks the desired trajectory (see Fig. \ref{fig:result}(c)). {External torques applied to the robot during the teaching are estimated by the FRI}. In case a non-zero torque is detected, the measured position in each time $\bfx_r$ is commanded to the robot. This allows the user to distract the robot from the DS trajectory and to start collecting training data. When the teaching ends (no external torque), the DS generates on-line a new trajectory starting from the current robot position. As shown in Fig. \ref{fig:result}(b), with this procedure the robot smoothly switches between teaching and execution phases. Training data are sampled at $10\,$Hz, i.e. we store one position every $50$ {samples}, to eliminate points {that} are too close to each other. Alternatively, one can collect only points at a certain distance. {Algorithm \ref{alg:learning} is used to add/remove barriers}. At run-time, the DS state is constrained by the {computed control} to remain within the admissible set.

\subsubsection*{{Results}}
The results of the proposed approach are shown in Fig. \ref{fig:result}. The desired robot trajectory in the $x$-$y$ plane is generated with the SEDS system used in Sec. \ref{subsec:simulation} (Fig. \ref{fig:ds_simulation}(a)), while the $z$ motion is generated with the linear DS $\dot{z} = \hat{z}-z$. The robot loops between the two goal positions $\hat{\bfx}_1 = [-0.5,0,-0.05]\tr$ and $\hat{\bfx}_2 = [-0.6,0,0.1]\tr$, depicted as black dots in Fig. \ref{fig:result}(c) and (d), following a periodic motion. This is obtained by switching the desired goal after reaching the previous one. The top row of  Fig. \ref{fig:result} shows snapshots of the interactive learning procedure. Kinesthetic teaching is effectively used to add new constraints or to remove existing ones, while the control based on the barrier functions renders the admissible set forward invariant generating a smooth constrained trajectory (middle and bottom rows of Fig. \ref{fig:result}). As stated in Remark 2, being the goals $\hat{\bfx}_1$ and $\hat{\bfx}_2$ admissible states, the constrained motion converges to the desired goal.

\subsection{Discussion}
Presented experiments show the effectiveness of our approach. The user demonstrates position constraints via kinesthetic teaching and linear barrier functions are incrementally learned. 
Moreover, the user can cancel a learned constraint by pushing the robot beyond the relative barrier. This is useful, for example, to cope with an erroneous demonstration, or to adapt the motion to a new scenario with different bounds. 


In this paper, we use the DS in the so-called open loop configuration where the DS is initialized with the robot state and the resulting motion is sent to the robot as a reference trajectory to track. Alternatively, one can use the DS in closed loop configuration by continuously feeding the measured state into the DS. This allows a continuous adaptation of the motion to external perturbations, but requires a customized controller to ensure stability or passivity of the closed loop system  \cite{Kronander16}. The presented approach does not directly apply to the closed loop configuration because the control input would not render the admissible set forward invariant. Indeed, the robot dynamics are not considered in $\bfu$ which may cause constraint violations. For the closed loop configuration, one has to directly modify the robot controller to ensure forward invariance of a given set. Although it is possible to constrain the robot controller \cite{Kimmel14, Zanchettin16}, the passivity of the closed loop system is not guaranteed. For this reason, we adopt the open loop configuration and leave {the extension to} closed loop configuration as a future work. 

The ZCBF formulation in Sec. \ref{sec:planning} is valid for linear and non-linear ZCFBs. In this work, we used planar ZCBFs for two reasons: \textit{i)} it is straightforward to satisfy the requirements of Theorem \ref{th:optimal_solution}, and \textit{ii)} training data are efficiently and incrementally clustered into planar regions. A limitation of the planar ZCBFs is shown in Fig. \ref{fig:problem}. To preserve the stability, the equilibrium point has to lie in the admissible set (Remark 2). This assumption can be violated if one wants to avoid a closed region like an obstacle. As shown in Fig. \ref{fig:problem}(b), the goal can be non-admissible and the motion stops at the boundary of the admissible set. Another possibility is that the starting point is non-admissible (Fig. \ref{fig:problem}(c)). In this case, the control input drives the state within the admissible set but the trajectory can enter the white area. To overcome this limitation, one can use a non-linear ZCBF and create a {concave} admissible set that contains starting and goal points but not the white region. {The possibility of planning within concave sets will also make possible to extend the approach to the joint space, where obstacles have typically non-convex shapes. Extending the approach to non-linear functions with a concave admissible set is left as future work.} 
\begin{figure}[t]
	\centering
	\subfigure[]{\includegraphics[width=0.32\columnwidth]{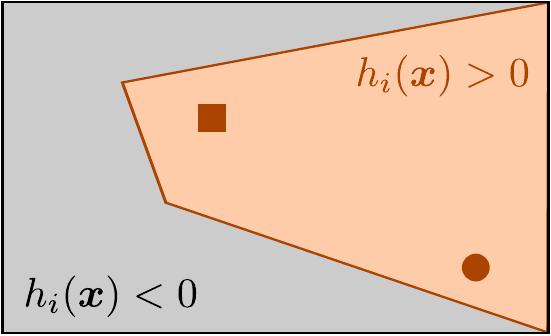}}
	\subfigure[]{\includegraphics[width=0.32\columnwidth]{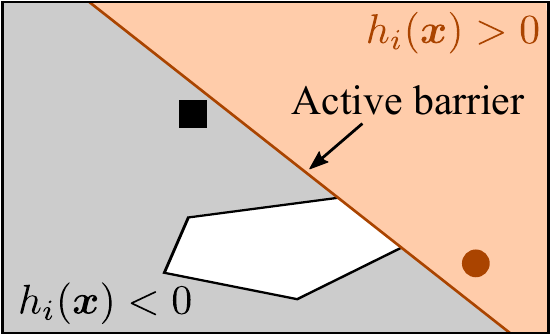}}
	\subfigure[]{\includegraphics[width=0.32\columnwidth]{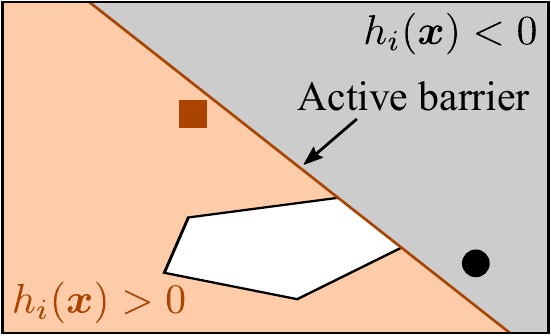}}
    	\caption{(a) The admissible set contains both the starting position and the goal. Trying to avoid a convex region (white area) the goal (square) (b) or the initial position (dot) (c) becomes a non-admissible state. }
    	\label{fig:problem}
\end{figure} 
\hspace*{-1mm}



\section{Conclusions}\label{sec:conclusion}
We presented an approach to learn task space constraints from human demonstrations and to exploit the learned constraints to plan bounded trajectories for robotic manipulators. The approach is incremental and works on-line. Training data are firstly clustered in an unsupervised manner. Points belonging to newly discovered clusters are used to fit planar barrier functions representing the constraints. A previously learned constraint can be easily deleted by pushing the robot beyond the relative barrier. Learned barrier functions are then used to constrain the state of the dynamical system used to plan robot trajectories. We presented an optimization-based approach to compute a smooth control input that forces the DS trajectories to remain within the learned bounds and preserves eventual stability properties. Simulations and experiments show that our approach is effective in learning task space constraints and planning feasible motions. 

{As a future work, we plan to automatically generate training samples using surface-tracking techniques and to consider second-order DS to learn velocity barriers.}



\bibliographystyle{IEEEtran}
\bibliography{bibliography}

\end{document}